
\documentclass{article}

\usepackage{microtype}
\usepackage{graphicx}
\usepackage{subfigure}
\usepackage{booktabs} 
\usepackage{amsmath}
\usepackage{amssymb}
\usepackage{stackengine}

\usepackage{tikz}
\usepackage{pgfplots}
\usepgfplotslibrary{fillbetween}
\usetikzlibrary{patterns}
\usetikzlibrary{shapes,arrows,chains}

\usepackage{grffile}

\usepackage{hyperref}



\usepackage[accepted]{icml2018}

\usepackage{siunitx}

\newtheorem{theorem}{Theorem}[section]
\newtheorem{lemma}[theorem]{Lemma}
\newtheorem{definition}[theorem]{Definition}
\newtheorem{proposition}[theorem]{Proposition}

\newenvironment{proof}{\par\noindent{\bf Proof:\ }}{\hfill$\Box$\\[2mm]}

\newif\ifpaper
\papertrue 

\def\RR{\mathbb{R}}
\def\>{\rangle}

\def\Set#1{\left\{ #1 \right\}}
\def\Bigbar#1{\mathrel{\left|\vphantom{#1}\right.}}
\def\Setbar#1#2{\Set{#1 \Bigbar{#1 #2} #2}}
\newcommand{\red}[1]{{\color{red}#1}}

\newcommand{\inner}[1]{\left\langle#1\right\rangle}

\def\bydef{\mathrel{\mathop:}=}

\def\dom{\mathop{\rm dom}\nolimits}
\def\range{\mathop{\rm range}\nolimits}

\def\ker{\mathop{\rm ker}\nolimits}



\def\max{\mathop{\rm max}\nolimits}

\def\ie{\textit{i.e. }}
\def\eg{\textit{e.g. }}

\icmltitlerunning{Neural Networks Should Be Wide Enough to Learn Disconnected Decision Regions}

\begin{document}

\twocolumn[
\icmltitle{Neural Networks Should Be Wide Enough to Learn Disconnected Decision Regions}

\icmlsetsymbol{equal}{*}

\begin{icmlauthorlist}
\icmlauthor{Quynh Nguyen}{a}
\icmlauthor{Mahesh Chandra Mukkamala}{a}
\icmlauthor{Matthias Hein}{b}
\end{icmlauthorlist}

\icmlaffiliation{a}{Department of Mathematics and Computer Science, Saarland University, Germany}
\icmlaffiliation{b}{University of T{\"u}bingen, Germany}
\icmlcorrespondingauthor{Quynh Nguyen}{quynh@cs.uni-saarland.de}
\icmlkeywords{neural networks, classification regions, connected decision regions}

\vskip 0.3in
]

\printAffiliationsAndNotice{}  

\begin{abstract}
      In the recent literature the important role of depth in deep learning has been emphasized. In this paper
      we argue that sufficient width of a feedforward network is equally important by answering the simple question
      under which conditions the decision regions of a neural network are connected. It turns out that for a class of
      activation functions including leaky ReLU, neural networks having a pyramidal structure, that is no layer has
      more hidden units than the input dimension, produce necessarily connected decision regions. This implies that 
      a sufficiently wide hidden layer is necessary to guarantee that the network can produce disconnected decision regions. 
      We discuss the implications of this result for the construction of neural networks, 
      in particular the relation to the problem of adversarial manipulation of classifiers.
\end{abstract}


%

\section{Introduction}
While deep learning has become state of the art in many application domains such as computer vision and
natural language processing and speech recognition, the theoretical understanding of this success is steadily
growing but there are still plenty of questions where there is little or no understanding. In particular, for the question
how one should construct the network e.g. choice of activation function, number of layers, number of hidden units per layer etc.,
there is little guidance and only limited understanding on the implications of the choice e.g. ``The design of hidden units is an extremely active area of research and does not
yet have many definitive guiding theoretical principles.'' is a quote from the recent book on deep learning \citep[p. 191]{Goodfellow-et-al-2016}.
Nevertheless there is recently progress in the understanding of these choices. 

The first important results are the universal approximation theorems \citep{Cybenko1989,Hornik1989} 
which show that even a single hidden layer network with standard non-polynomial activation function \cite{Leshno1993}, like the sigmoid,
can approximate arbitrarily well every continuous function over a compact domain of $\RR^d$.
In order to explain the success of deep learning, much of the recent effort has been spent on analyzing the representation power of neural networks from the 
perspective of depth
\citep{Delalleau2011,Telgarsky2016,Telgarsky2015,Eldan2016,Safran2017,Yarotsky2016,Poggio2016,Liang2017,Mhaskar2016}.
Basically, they show that there exist functions that can be computed efficiently 
by deep networks of linear or polynomial size but require exponential size for shallow networks.
To further highlight the power of depth, \cite{Montufar2014, Pascanu2014} show that
the number of linear regions that a ReLU network can form in the input space grows exponentially with depth.
Tighter bounds on the number of linear regions are later on developed by \cite{AroraEtal2018, SerraEtal2018, ChaMar2018}.
Another measure of expressivity so-called trajectory length is proposed by \cite{Raghu2017}.
They show that the complexity of functions computed by the network 
along a one-dimensional curve in the input space also grows exponentially with depth.

While most of previous work 
can only show the existence of depth efficiency (\ie there exist certain functions that can be efficiently represented by
deep networks but not effectively represented or even approximated by shallow networks)
but cannot show how often this holds for all functions of interest,
\cite{CohenCOLT2016} have taken the first step to address this problem.
In particular, by studying a special type of networks called convolutional arithmetic circuits 
-- also known as Sum-Product networks \citep{Poon2011},
the authors show that besides a set of measure zero, 
all functions that can be realized by a deep network of polynomial size 
require exponential size in order to be realized, or even approximated by a shallow network.
Later, \cite{CohenICML2016} show that this property however no longer holds for convolutional rectifier networks,
which represents so far the empirically most successful deep learning architecture in practice.

Unlike most of previous work which focuses on the power of depth,
\citep{Lu2017,Hanin2017} have recently shown that neural networks with ReLU activation function
have to be wide enough in order to have the universal approximation property as depth increases.
In particular, the authors show that the class of continuous functions on a compact set cannot be arbitrarily well approximated by 
an arbitrarily deep network if the maximum width of the network is not larger than the input dimension $d$.
Moreover, it has been shown recently, that the loss surface of fully connected networks \citep{Quynh2017}
and for convolutional neural networks \citep{Quynh2017_conv} is well behaved, in the sense that almost all
local minima are global minima, if there exists a layer which has more hidden units than the number of training points.

In this paper we study the question under which conditions on the network the decision regions of a neural network are 
connected respectively can potentially be disconnected. The decision region of a class is the subset of $\RR^d$, 
where the network predicts this class. 
A similar study has been in \cite{Makhoul1989,Makhoul1990} for feedforward
networks with threshold activation functions, where they show that the initial layer has to have width $d+1$ in order that one can
get disconnected decision regions. On an empirical level it has recently been argued \cite{Fawzi2017} that the decision regions of 
the Caffe Network \cite{JiaEtAl2014} on ImageNet are connected. In this paper we analyze feedforward networks with 
continuous activation functions as currently used in practice. We show in line with previous work that almost all networks which
have a pyramidal structure up to the last hidden layer, that is the width of all hidden layers is smaller than the input dimension $d$,
can only produce connected decision regions. 
We show that the result is tight by providing explicit counterexamples for the case $d+1$. 
We conclude that a guiding principle
for the construction of neural networks should be that there is a layer which is wider than the input dimension 
as it would be a strong assumption that the Bayes optimal classifier must have connected decision regions. 
Interestingly, our result holds for leaky ReLU, that is $\sigma(t)=\max\{t,\alpha t\}$ for $0<\alpha<1$, 
whereas the result of \cite{Hanin2017} is for ReLU, that is $\sigma(t)=\max\{t,0\}$,
but ``the generalization is not straightforward, even for activations of the form $\sigma(t)=\max\{l_1(t),l_2(t)\}$, 
where $l_1,l_2$ are affine functions with different slopes.''
We discuss also the implications of connected decision regions regarding the generation of adversarial samples, 
which will provide another argument in favor of larger width for neural network architectures.

    
\section{Feedforward Neural Networks}\label{sec:fnn}
We consider in this paper feedforward neural networks for multi-class classification.
Let $d$ be the input dimension and $m$ the number of classes.
Let $L$ be the number of layers where the layers are indexed from $k=0,1,\ldots,L$
which respectively corresponds to the input layer, 1st hidden layer, $\ldots$, and the output layer $L$.
Let $n_k$ be the width of layer $k$. 
For consistency, we assume that $n_0=d$ and $n_L=m$.
Let $\sigma_k:\RR\to\RR$ be the activation function of every hidden layer $1\leq k\leq L-1$.
In the following, all functions are applied componentwise.
We define $f_k:\RR^d\to\RR^{n_k}$ as the feature map of layer $k$, which computes for every input $x\in\RR^d$ 
a feature vector at layer $k$ defined as
\begin{align*}
    f_k(x) = 
    \begin{cases}
	x & k=0\\
	\sigma_k\big( W_k^T f_{k-1}(x) + b_k \big) & 1\leq k\leq L-1\\
	 W_L^T f_{L-1}(x) + b_L & k=L
    \end{cases}
\end{align*}
where $W_k\in\RR^{n_{k-1}\times n_k}$ is the weight matrix at layer $k$.
Please note that the output layer is linear as it is usually done in practice. We consider in the following
activation functions $\sigma: \RR \rightarrow \RR$ which are continuous and strictly monotonically increasing.
This is true for most of proposed activation functions, but does not hold for ReLU, $\sigma(t)=\max\{t,0\}$. On the
other hand, it has been argued in the recent literature, that the following variants are to be preferred over ReLU
as they deal better with the vanishing gradient problem and outperform ReLU in prediction performance \cite{HeEtAl2015,Clevert2016}. This is leaky ReLU \cite{Maas2013}: 
        \[ \sigma(t)=\max\{t,\alpha t\} \quad \textrm{ for } \quad 0<\alpha<1,\] 
        where typically $\alpha$ is fixed but it has also been optimized
        together with the network weights \cite{HeEtAl2015} and ELU (exponential linear unit) \citep{Clevert2016}: 
        \[ \sigma(t) = \begin{cases}
 	e^t-1 & t < 0\\
 	t & t\geq 0 .
     \end{cases}.\]
Note that image of the activation function $\sigma$, $\sigma(\RR)=\{ \sigma(t)\,|\, t \in \RR\}$, is equal to $\RR$ for leaky ReLU and $(-1,\infty)$  for the
exponential linear unit.

\section{Connectivity of  Decision Regions}\label{sec:topology}
In this section, we prove two results on the connectivity of the decision
regions of a classifier. Both require that the activation function is continuous and strictly monotonically increasing.
Our main Theorem \ref{theo:class_region}
holds for feedforward networks of arbitrary depth and requires additionally $\sigma(\RR)=\RR$, 
the second Theorem \ref{theo:class_region_one} holds just for one hidden layer networks but has no further requirements on the activation function. 
Both show that in general pyramidal feedforward neural networks 
where the width of all the hidden layers is smaller than or equal to the input dimension
can only produce connected decision regions.

\subsection{Preliminary technical results}
We first introduce the definitions and terminologies used in the following, before we prove
or recall some simple results about continuous mappings from $\RR^m$ to $\RR^n$.
For a function $f:U\to V$, where $\dom(f)=U\subseteq\RR^m$ and $V\subseteq\RR^n$, we denote
for every subset $A\subseteq U$, the image $f(A)$ as   $f(A)\bydef\Setbar{f(x)}{x\in A}=\bigcup_{x\in A}f(x).$
Let $\range(f)\bydef f(U).$
\begin{definition}[Decision region]\label{def:class_region}
    The decision region of a given class $1\leq j\leq m$, denoted by $C_j$, is defined as
    \begin{align*}
	C_j = \Setbar{x\in\RR^d}{(f_L)_j(x) > (f_L)_k(x),\; \forall k\neq j} .
    \end{align*}
\end{definition}
\begin{definition}[Connected set]\label{def:connected_set}
    A subset $S\subseteq\RR^d$ is called connected if for every $x,y\in S$, 
    there exists a continuous curve $r: [0,1]\to S$ such that $r(0)=x$ and $r(1)=y.$
\end{definition}
To prove our key Lemma \ref{lem:inverse_layer}, 
the following properties of connected sets and continuous functions are useful.
All proofs are moved to the appendix due to limited space.
\begin{proposition}\label{prop:connected_continuous_map}
    Let $f:U\to V$ be a continuous function.
    If $A\subseteq U$ is a connected set then $f(A)\subseteq V$ is also a connected set.
\end{proposition}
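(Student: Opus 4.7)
The plan is to argue directly from the path-connectivity definition given in Definition \ref{def:connected_set}, since ``connected'' in this paper means path-connected. Given the continuous map $f:U\to V$ and a path-connected subset $A\subseteq U$, I want to show that any two points of $f(A)$ can be joined by a continuous curve lying entirely in $f(A)$.

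First I would pick arbitrary $u,v\in f(A)$. By definition of the image, there exist $x,y\in A$ with $f(x)=u$ and $f(y)=v$. Since $A$ is connected, Definition \ref{def:connected_set} gives a continuous curve $r:[0,1]\to A$ with $r(0)=x$ and $r(1)=y$. The natural candidate for the curve joining $u$ and $v$ in $f(A)$ is then the composition $s\bydef f\circ r:[0,1]\to\RR^n$.

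Next I would verify the three properties that $s$ must satisfy. Continuity of $s$ follows because it is the composition of two continuous functions: $r$ is continuous by construction, and $f$ is continuous by hypothesis on $U$ (and in particular on the range of $r$, which lies in $A\subseteq U$). The endpoint conditions are immediate: $s(0)=f(r(0))=f(x)=u$ and $s(1)=f(r(1))=f(y)=v$. Finally, $s$ takes values in $f(A)$ because for every $t\in[0,1]$ one has $r(t)\in A$, hence $s(t)=f(r(t))\in f(A)$.

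Since $u$ and $v$ were arbitrary in $f(A)$, the existence of such an $s$ shows that $f(A)$ is connected in the sense of Definition \ref{def:connected_set}, completing the proof. There is no real obstacle here; this is a standard one-line topology argument, and the only thing worth being careful about is that the paper's notion of connectedness is path-connectedness, so one must exhibit the curve explicitly rather than invoke the general topological characterization via clopen sets.
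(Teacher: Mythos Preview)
Your proposal is correct and follows essentially the same argument as the paper's own proof: both pick two points in $f(A)$, lift them to preimages in $A$, use Definition~\ref{def:connected_set} to get a curve $r$ in $A$, and then check that $f\circ r$ is the required continuous curve in $f(A)$. The only difference is cosmetic (you name the composed curve $s$ and spell out the range condition $s(t)\in f(A)$ explicitly, which the paper leaves implicit).
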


\begin{proposition}\label{prop:minkowski}
    The Minkowski sum of two connected subsets $U,V\subseteq\RR^n$,
    defined as $U+V=\Setbar{u+v}{u\in U,v\in V}$, is a connected set.
\end{proposition}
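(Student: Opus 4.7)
The plan is to use the fact that \emph{connected} in Definition \ref{def:connected_set} actually means \emph{path-connected}, so it suffices to construct, for any two points in $U+V$, a continuous curve joining them that stays inside $U+V$.

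Concretely, pick arbitrary points $x, y \in U+V$ and write $x = u_1 + v_1$, $y = u_2 + v_2$ with $u_1, u_2 \in U$ and $v_1, v_2 \in V$. Since $U$ is connected, there exists a continuous curve $\alpha:[0,1]\to U$ with $\alpha(0)=u_1$, $\alpha(1)=u_2$, and similarly a continuous curve $\beta:[0,1]\to V$ with $\beta(0)=v_1$, $\beta(1)=v_2$. Then I would define $r:[0,1]\to U+V$ by $r(t)=\alpha(t)+\beta(t)$. Continuity is immediate from the continuity of $\alpha$, $\beta$, and addition, the range lies in $U+V$ by construction, and $r(0)=x$, $r(1)=y$, which is exactly what Definition \ref{def:connected_set} demands.

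An essentially equivalent variant, which may fit the style of the paper better since it explicitly invokes Proposition \ref{prop:connected_continuous_map}: for each fixed $v\in V$ the translate $U+\{v\}$ is the image of the connected set $U$ under the continuous map $u\mapsto u+v$, so it is connected; likewise each $\{u\}+V$ is connected. One can then traverse from $x=u_1+v_1$ to $u_2+v_1$ inside $U+\{v_1\}\subseteq U+V$, and from $u_2+v_1$ to $u_2+v_2$ inside $\{u_2\}+V\subseteq U+V$, and concatenate the two paths.

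There is no real obstacle here: the statement is essentially a one-line consequence of the continuity of vector addition together with the path-connectedness hypothesis. The only thing to be slightly careful about is matching the definition of ``connected'' used in the paper (which is path-connectedness), so that exhibiting an explicit continuous curve suffices and one does not need to invoke the topological notion of connectedness separately.
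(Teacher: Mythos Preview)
Your first argument is correct and is essentially identical to the paper's own proof: the paper also writes $x=a+c$, $y=b+d$, takes continuous curves $p$ in $U$ and $q$ in $V$ joining $a$ to $b$ and $c$ to $d$, and sets $r(t)=p(t)+q(t)$. Your alternative two-step translate argument is a minor variant and is not needed.
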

As our main idea is to transfer the connectedness of a set from the output layer 
back to the input layer, we require the notion of pre-image and inverse mapping.
\begin{definition}[Pre-Image]
    The pre-image of a function $f:U\to V$ is the set-valued function $f^{-1}:V\to U$ defined 
    for every $y\in V$ as 
     \[ f^{-1}(y) = \Setbar{x\in U}{f(x)=y}.\] 
    Similarly, for every subset $A\subseteq V$, let 
\[ f^{-1}(A)=\bigcup_{y\in A}f^{-1}(y)=\Setbar{x\in U}{f(x)\in A}.\]
\end{definition}
By definition, it holds for every subset $A\subseteq V$ that $f(x)\in A$ if and only if $x\in f^{-1}(A).$
Moreover, for every $A\subseteq V$ 
\begin{align*}
    f^{-1}(A)
    &=f^{-1}(A\cap\range(f))\;\cup\;f^{-1}(A\setminus\range(f)) \\
    &=f^{-1}(A\cap\range(f))\;\cup\;\emptyset \\
    &=f^{-1}(A\cap\range(f)) .
\end{align*}
As a deep feedforward network is a composition of the individual layer functions, we need the following property.
\begin{proposition}\label{prop:inverse_compo}
    Let $f:U\to V$ and $g:V\to Q$ be two functions.
    Then it holds that $(g\circ f)^{-1} = f^{-1}\circ g^{-1}.$
\end{proposition}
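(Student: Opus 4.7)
The plan is a direct verification from the definition of pre-image; there is no real obstacle here, as the statement is essentially a bookkeeping identity about sets, so the proof amounts to unfolding both sides on an arbitrary subset $A\subseteq Q$ and observing that they produce the same subset of $U$. Since $f^{-1}$ and $g^{-1}$ have been introduced as set-valued maps and the pre-image of a single point is just the special case $A=\{z\}$, it is enough to check the identity at the level of subsets.

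First, I would unfold the left-hand side using the definition of pre-image applied to the composed function $g\circ f:U\to Q$, obtaining
\[
(g\circ f)^{-1}(A)=\Setbar{x\in U}{(g\circ f)(x)\in A}=\Setbar{x\in U}{g(f(x))\in A}.
\]
Next, I would unfold the right-hand side in two steps. The inner pre-image gives $g^{-1}(A)=\Setbar{y\in V}{g(y)\in A}$, and then by the definition of $f^{-1}$ applied to this subset of $V$,
\[
f^{-1}\!\left(g^{-1}(A)\right)=\Setbar{x\in U}{f(x)\in g^{-1}(A)}=\Setbar{x\in U}{g(f(x))\in A}.
\]

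Comparing the two displays, both sides describe exactly the same subset of $U$, so $(g\circ f)^{-1}(A)=f^{-1}(g^{-1}(A))$ for every $A\subseteq Q$, which is precisely the claimed equality of set-valued functions. Specializing to singletons $A=\{z\}$ recovers the pointwise statement. The only thing to be careful about is keeping the two notations for pre-image (of a point and of a set) straight; once the definitions are written out, the identity is immediate and no further argument is needed.
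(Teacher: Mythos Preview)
Your proof is correct and takes essentially the same approach as the paper: both unfold the definition of pre-image on an arbitrary subset $A\subseteq Q$ and verify directly that $(g\circ f)^{-1}(A)=\Setbar{x\in U}{g(f(x))\in A}=f^{-1}(g^{-1}(A))$. The paper presents this as a single chain of equalities rather than computing the two sides separately, but the content is identical.
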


\begin{figure*}[ht!]
\centering
    \begin{tikzpicture}[scale=1.8]
      \begin{scope}
	\draw[thick,->] (-1.5,0) -- (1.5,0) node[anchor=north west] {x};
	\draw[thick,->] (0,-1.5) -- (0,1.5) node[anchor=south east] {y};
	\draw (0.125cm,1pt) -- (0.125 cm,-1pt) node[anchor=north] {$\frac{1}{8}$};
	\draw (1pt,0.125 cm) -- (-1pt,0.125 cm) node[anchor=east] {$\frac{1}{8}$};
	\draw (1pt,1 cm) -- (-1pt,1 cm) node[anchor=east] {$1$};
	\draw (1pt,-1 cm) -- (-1pt,-1 cm) node[anchor=east] {$-1$};
	\draw (-1 cm,-1pt) -- (-1 cm,1pt) node[anchor=north] {$-1$};
	\draw (1 cm,-1pt) -- (1 cm,1pt) node[anchor=north] {$1$};
	\node at (0.5,0.5) {$S$};
	\coordinate (a) at (1.35,0.5);
	
	\draw[dashed] (-1.3,1.3)node[anchor=south west] {$f(\RR)$} -- (1.3,-1.3);
	\draw[thick,-,blue] (-1,1) -- (-0.125,0.125);
	\draw[thick,-,red] (0.125,-0.125) -- (1,-1);
	\fill[pattern=north east lines, pattern color=gray] (-1,0.125) -- (0.125,0.125) -- (0.125,-1) -- (1,-1) -- (1,1) -- (-1,1) -- cycle;
      \end{scope}
      
      \begin{scope}[xshift=3.5cm]
	\draw[thick,->] (-1.5,0) -- (1.5,0) node[anchor=north west] {};
	\draw (-1 cm,-1pt) -- (-1 cm,1pt) node[anchor=north] {$-1$};
	\draw (1 cm,-1pt) -- (1 cm,1pt) node[anchor=north] {$1$};
	\draw (0 cm,-1pt) -- (0 cm,1pt) node[anchor=north] {$0$};
	\draw (-0.5 cm,-1pt) -- (-0.5 cm,1pt) node[anchor=north] {$-\frac{1}{2}$};
	\draw (0.5 cm,-1pt) -- (0.5 cm,1pt) node[anchor=north] {$\frac{1}{2}$};
	\coordinate (b) at (-1.35,0.5);
	
	\fill[pattern=north east lines, pattern color=red] (-1,-0.05) -- (-0.5,-0.05) -- (-0.5,0.05) -- (-1,0.05) -- cycle;
	\fill[pattern=north east lines, pattern color=blue] (1,-0.05) -- (0.5,-0.05) -- (0.5,0.05) -- (1,0.05) -- cycle;
      \end{scope}
      
      \draw[->] (a) -- node [midway,above]{$f^{-1}(S)$} (b);
    \end{tikzpicture}
\caption{Left: illustration of the image of $\RR$ under the mapping $f$, denoted as $f(\RR) \subset \RR^2$ 
for the toy example from  \eqref{eq:toyex} which maps into a lower-dimensional subspace (the diagonal line). 
Right: The pre-image $f^{-1}(S) \subset \RR$ of the connected $S$ becomes disconnected.}
\label{fig:disc}
\end{figure*}
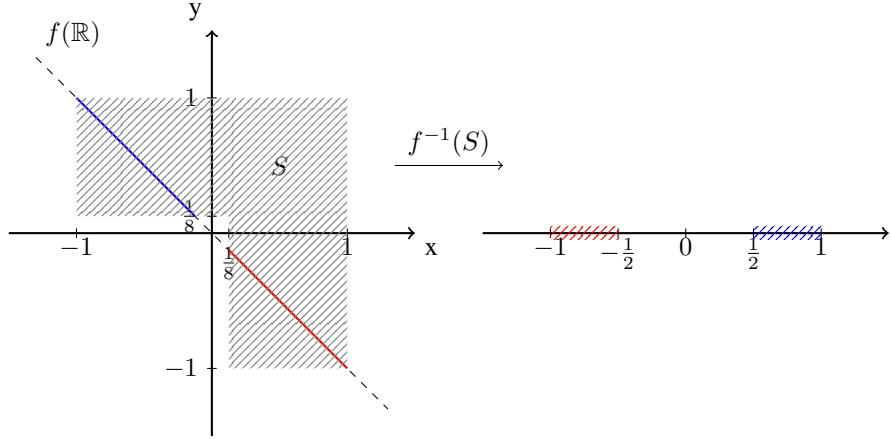

Apart from the property of connectivity, we can also show the openness of a set when considering the pre-image
of a given network.
We recall the following standard result from topology (see \eg \citealp{Apostol1974}, Theorem 4.23, p. 82).
\begin{proposition}\label{prop:inverse_open}
    Let $f:\RR^m\to\RR^n$ be a continuous function.
    If $U\subseteq\RR^n$ is an open set then $f^{-1}(U)$ is also open.
\end{proposition}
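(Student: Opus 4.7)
The statement is the standard topology fact that continuous preimages of open sets are open, specialized from $\RR^m$ to $\RR^n$. Since the paper uses the elementary $\varepsilon$--$\delta$ definition of continuity implicitly, the plan is to give a direct $\varepsilon$--$\delta$ argument rather than invoking any abstract topological machinery.

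The plan is to fix an arbitrary point $x \in f^{-1}(U)$ and show it is an interior point of $f^{-1}(U)$, i.e., find an open ball around $x$ contained in $f^{-1}(U)$. By definition of the preimage, $f(x) \in U$. Since $U$ is open in $\RR^n$, there exists $\varepsilon > 0$ such that the open ball $B(f(x), \varepsilon) \subseteq U$. Now apply continuity of $f$ at the point $x$: there exists $\delta > 0$ such that for every $y \in \RR^m$ with $\|y - x\| < \delta$, we have $\|f(y) - f(x)\| < \varepsilon$, i.e., $f(y) \in B(f(x), \varepsilon) \subseteq U$.

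This last implication says exactly that $y \in f^{-1}(U)$, so $B(x, \delta) \subseteq f^{-1}(U)$. Since $x \in f^{-1}(U)$ was arbitrary, every point of $f^{-1}(U)$ is interior, which is precisely the definition of $f^{-1}(U)$ being open in $\RR^m$. If $f^{-1}(U)$ happens to be empty there is nothing to prove, since the empty set is open by convention.

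There is no real obstacle here; the proof is essentially a one-paragraph $\varepsilon$--$\delta$ exercise, and the paper even cites Apostol for it. The only thing to be slightly careful about is the edge case $f^{-1}(U) = \emptyset$, which is handled trivially as noted above.
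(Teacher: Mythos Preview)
Your proof is correct. The paper does not actually supply its own proof of this proposition; it simply cites Apostol (Theorem~4.23) as a reference, so your direct $\varepsilon$--$\delta$ argument is exactly the kind of standard proof the citation points to.
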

We now recall a standard result from calculus showing that under certain, restricted conditions the inverse
of a continuous mapping exists and is as well continuous.
\begin{proposition}\label{lem:inverseContFct}
Let $f:\RR \rightarrow f(\RR)$ be continuous and strictly monotonically increasing.
Then the inverse mapping $f^{-1}: f(\RR) \rightarrow \RR$ exists and is continuous.
\end{proposition}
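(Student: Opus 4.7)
The plan is to treat this as a standard elementary analysis exercise, splitting it into (i) existence of the inverse and (ii) continuity of the inverse, and handling the second part via a direct $\varepsilon$--$\delta$ argument that exploits strict monotonicity.

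For existence, I would first observe that strict monotonic increase immediately implies injectivity: if $x_1 < x_2$, then $f(x_1) < f(x_2)$, so $f(x_1) \ne f(x_2)$. Since the codomain is defined to be $f(\RR)$, the function is surjective by construction. Hence $f$ is a bijection from $\RR$ onto $f(\RR)$, and $f^{-1}: f(\RR) \to \RR$ exists as a function. I would also note, as a useful by-product used later, that $f^{-1}$ is itself strictly monotonically increasing: if $y_1 < y_2$ in $f(\RR)$ and we had $f^{-1}(y_1) \ge f^{-1}(y_2)$, applying $f$ (which preserves strict order) would contradict $y_1 < y_2$.

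For continuity of $f^{-1}$ at an arbitrary point $y_0 \in f(\RR)$, let $x_0 = f^{-1}(y_0)$ and fix $\varepsilon > 0$. The idea is to use the values of $f$ at $x_0 \pm \varepsilon$ as anchors. Set
\begin{equation*}
a = f(x_0 - \varepsilon), \qquad b = f(x_0 + \varepsilon),
\end{equation*}
so that strict monotonicity gives $a < y_0 < b$. Define $\delta = \min\{y_0 - a,\ b - y_0\} > 0$. For any $y \in f(\RR)$ with $|y - y_0| < \delta$, we then have $a < y < b$, and applying the (strictly increasing) $f^{-1}$ to all three terms gives
\begin{equation*}
x_0 - \varepsilon = f^{-1}(a) < f^{-1}(y) < f^{-1}(b) = x_0 + \varepsilon,
\end{equation*}
i.e.\ $|f^{-1}(y) - x_0| < \varepsilon$. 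This is exactly the $\varepsilon$--$\delta$ definition of continuity of $f^{-1}$ at $y_0$. Since $y_0 \in f(\RR)$ was arbitrary, $f^{-1}$ is continuous on its domain.

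There is no real obstacle here; the only subtle point worth checking is that the values $a$ and $b$ do lie in $f(\RR)$ (which they do by construction, as images of real numbers under $f$), so that applying $f^{-1}$ to them is legitimate. Continuity of $f$ itself is, interestingly, not even needed for the inverse to be continuous in this one-dimensional strictly monotone setting; it is needed only to ensure that $f(\RR)$ is an interval, which is useful context but not logically required for the argument above.
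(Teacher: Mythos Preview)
Your argument is correct and entirely standard: injectivity from strict monotonicity, surjectivity onto $f(\RR)$ by definition, and the $\varepsilon$--$\delta$ continuity proof via the anchors $a=f(x_0-\varepsilon)$ and $b=f(x_0+\varepsilon)$ all go through as written. Your side remark that continuity of $f$ is not actually used for the continuity of $f^{-1}$ is also accurate.

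As for comparison with the paper: the paper does not actually prove this proposition. It is introduced with the phrase ``We now recall a standard result from calculus\ldots'' and no proof is supplied anywhere in the text or the appendix (only Propositions~3.3, 3.4, and 3.6 are proved there). So there is nothing to compare against; your proposal simply fills in a detail the authors chose to take for granted.
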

The following lemma is a key ingredient in the following proofs. It allows us to show that the pre-image 
of an open and connected set by a one hidden layer network is again open and connected. Using the fact
that deep networks can be seen as a composition of such individual layers, this will later on allow us to transfer
the result to deep networks.
\begin{lemma}\label{lem:inverse_layer}
    Let $m\geq n$ and $f:\RR^m\to\RR^n$ be a function defined as 
    $f=\hat{\sigma} \circ h$ where $\hat{\sigma}:\RR^n \to\RR^n$ is  defined as 
    \begin{equation}
      \hat{\sigma}(x)=\begin{pmatrix} \sigma(x_1)\\ \vdots \\ \sigma(x_n)\end{pmatrix} \label{eq:hatsigma},
    \end{equation}
   and $\sigma:\RR \rightarrow \RR$ is bijective, continuous and strictly monotonically increasing,
    $h:\RR^m\to\RR^n$ is a linear map defined as 
    $h(x)=W^Tx+b$ where $W\in\RR^{m\times n}$ has full rank and  $b\in\RR^n.$
    If $V\subseteq\RR^n$ is an open connected set then 
    $f^{-1}(V)\subseteq\RR^m$ is also an open connected set.
\end{lemma}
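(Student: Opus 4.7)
The plan is to establish openness and connectedness of $f^{-1}(V)$ as two separate tasks, both by reducing to results already proved in the paper.

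First, openness. Since $h$ is linear (hence continuous) and $\hat\sigma$ is continuous componentwise, their composition $f$ is continuous, so Proposition~\ref{prop:inverse_open} immediately gives that $f^{-1}(V)$ is open in $\RR^m$.

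For connectedness, I would first peel off the activation. By Proposition~\ref{prop:inverse_compo}, $f^{-1}(V) = h^{-1}(\hat\sigma^{-1}(V))$. Because $\sigma$ is bijective, continuous, and strictly monotonically increasing, Proposition~\ref{lem:inverseContFct} gives that $\sigma^{-1}$ is continuous, hence $\hat\sigma:\RR^n\to\RR^n$ is a homeomorphism. Applying Proposition~\ref{prop:connected_continuous_map} to $\hat\sigma^{-1}$ then shows that $U \bydef \hat\sigma^{-1}(V) \subseteq \RR^n$ is connected. What remains is the purely linear statement: if $U \subseteq \RR^n$ is connected, then $h^{-1}(U)$ is connected.

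Here is the key algebraic step. Since $W \in \RR^{m\times n}$ has rank $n$, the matrix $W^T W \in \RR^{n\times n}$ is invertible, so the map $\phi:\RR^n\to\RR^m$ defined by $\phi(y) \bydef W(W^T W)^{-1}(y-b)$ is a continuous right inverse of $h$, i.e.\ $h(\phi(y)) = y$. I would then verify the Minkowski-sum decomposition
\[
h^{-1}(U) \;=\; \phi(U) \,+\, \ker(W^T).
\]
The inclusion $\supseteq$ is a direct computation using $h(\phi(y) + z) = y$ for $z\in\ker(W^T)$. For $\subseteq$, given $x\in h^{-1}(U)$ with $y\bydef h(x)\in U$, write $x = \phi(y) + (x-\phi(y))$ and check $W^T(x-\phi(y)) = (y-b)-(y-b) = 0$, so the remainder lies in $\ker(W^T)$.

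Once this decomposition is established, $\phi(U)$ is connected as the continuous image of a connected set (Proposition~\ref{prop:connected_continuous_map}), $\ker(W^T)$ is a linear subspace and hence connected, and Proposition~\ref{prop:minkowski} therefore yields connectedness of their Minkowski sum, i.e.\ of $h^{-1}(U)=f^{-1}(V)$. The main obstacle is really the Minkowski decomposition in the last display: it is where the full-rank hypothesis on $W$ is used (without it there would be $y\in U$ with empty fiber, and $\phi$ would not even exist), and it is also the step that makes the argument for general $m\geq n$ genuinely different from the bijective case $m=n$.
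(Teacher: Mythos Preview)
Your proposal is correct and follows essentially the same route as the paper: decompose $f^{-1}=h^{-1}\circ\hat\sigma^{-1}$, use continuity of $\hat\sigma^{-1}$ to push connectedness through the activation, and then write $h^{-1}(U)=W(W^TW)^{-1}(U-b)+\ker(W^T)$ as a Minkowski sum of two connected sets. The only difference is cosmetic---you spell out both inclusions of the Minkowski decomposition, whereas the paper simply asserts it after noting $\range(h)=\RR^n$.
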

\begin{proof}
    By Proposition \ref{prop:inverse_compo}, it holds that $f^{-1}(V)=h^{-1}(\hat{\sigma}^{-1}(V)).$
    As $\hat{\sigma}$ is  a componentwise function, the inverse mapping $\hat{\sigma}^{-1}$ is  given by the inverse mappings of the components
    \[ \hat{\sigma}^{-1}:\RR^n \rightarrow \RR^n, \; \hat{\sigma}^{-1}(x)=\begin{pmatrix} \sigma^{-1}(x_1)\\ \vdots \\ \sigma^{-1}(x_n)\end{pmatrix},\]  
    where under the stated assumptions the inverse mapping $\sigma^{-1}:\RR \rightarrow \RR$ exists by Lemma \ref{lem:inverseContFct} and is continuous.
    Since $V\subseteq \RR^n=\dom(\hat{\sigma}^{-1})$,  $\hat{\sigma}^{-1}(V)$  
    is the image of the connected set $V$ under the continuous map $\hat{\sigma}^{-1}$.
    Thus by Proposition \ref{prop:connected_continuous_map}, $\hat{\sigma}^{-1}(V)$ is connected.
    Moreover, $\hat{\sigma}^{-1}(V)$ is an open set by Proposition \ref{prop:inverse_open}.
    
    It holds for every $y\in\RR^n$ that
    \begin{align*}
	&h^{-1}(y) \\
	&= \begin{cases}
	    \emptyset & y\notin \range(h)\\
	    W(W^TW)^{-1} (y-b) + \ker(W^T) & y\in\range(h) ,
	\end{cases}
    \end{align*}
    where the inverse of $W^TW$ exists as $W$ has full rank $n$ (note that we assume $n\leq m$).
    As $W$ has full rank and $m\geq n$, it holds that $\range(h)=\RR^n$ and thus
    \begin{align*}
	h^{-1}(y) = W(W^TW)^{-1} (y-b) + \ker(W^T), \quad \forall\,y\in\RR^n.
    \end{align*}
    Therefore it holds for $\hat{\sigma}^{-1}(V)\subseteq\RR^n$ that
    \begin{align*}
	h^{-1}\big(\hat{\sigma}^{-1}(V)\big) 
	= W(W^TW)^{-1} \big(\hat{\sigma}^{-1}(V)-b\big) + \ker(W^T) ,
    \end{align*}
    where the first term is the image of the connected set $\hat{\sigma}^{-1}(V)$ 
    under an affine mapping and thus is again connected by Proposition \ref{prop:connected_continuous_map},
    the second term $\ker(W^T)$ is a linear subspace which is also connected.
    By Proposition \ref{prop:minkowski}, the Minkowski sum of two connected sets is connected.
    Thus $f^{-1}(V)=h^{-1}(\hat{\sigma}^{-1}(V))$ is a connected set.
    Moreover, as $f^{-1}(V)$ is the pre-image of the open set $V$ under the continuous function $f$,
    it must be also an open set by Proposition \ref{prop:inverse_open}.
    Thus $f^{-1}(V)$ is an open and connected set.
\end{proof}
Note that in Lemma \ref{lem:inverse_layer}, if $m<n$ and $W$ has full rank then $\range(h)\subsetneq\RR^n$ and the linear equation $h(x)=y$ 
has a unique solution $x=(WW^T)^{-1}W(y-b)$ for every $y\in\range(h)$ and thus
\begin{align*}
    f^{-1}(V)
    &=h^{-1}\big(\sigma^{-1}(V)\big) \\
    &=h^{-1}\big(\sigma^{-1}(V)\cap\range(h)\big) \\
    &= (WW^T)^{-1}W \big((\sigma^{-1}(V)\cap\range(h))-b\big) .
\end{align*}
In this case, even though $\sigma^{-1}(V)$ is a connected set, 
the intersection $\sigma^{-1}(V)\cap\range(h)$ can be disconnected
which can imply that $f^{-1}(V)$ is disconnected and thus the decision region becomes disconnected.

We illustrate this with a simple example, where $m=1$ and $n=2$ with $\sigma(x)=x^3$ and $W^T=\begin{pmatrix} -1 \\ 1\end{pmatrix}$ and $b=\begin{pmatrix} 0\\0\end{pmatrix}$. 
In this case it holds that
\begin{equation}\label{eq:toyex} 
f(x) = \hat{\sigma}(W^Tx+b)= \begin{pmatrix} \sigma(-x)\\ \sigma(x)\end{pmatrix} = \begin{pmatrix} -x^3\\ x^3 \end{pmatrix}.
\end{equation}
Figure \ref{fig:disc} shows that $f(\RR)$ is a one-dimensional submanifold (in this case subspace) of $\RR^2$
and provides an example of a set $S \subset \RR^2$ where the pre-image $f^{-1}(S)$ is disconnected.

\subsection{Main results}
We show in the following that the decisions regions of feedforward networks which are pyramidal 
and have maximal width at most the input dimension $d$ can only produce connected decision regions. 
We assume for the activation functions that $\sigma(\RR)=\RR$, which is fulfilled by leaky ReLU.
\begin{theorem}\label{theo:class_region} 
    Let the width of the layers of the feedforward network network satisfy $d=n_0\geq n_1\geq\ldots\geq n_{L-1}$
    and let $\sigma_l:\RR\to\RR$ be a continuous, strictly monotonically increasing function with $\sigma_l(\RR)=\RR$ for every layer $1\leq l\leq L-1$
    and all the weight matrices $(W_l)_{l=1}^{L-1}$ have full rank.
    Then every decision region $C_j$ is an open connected subset of $\RR^d$
    for every $1\leq j\leq m.$
\end{theorem}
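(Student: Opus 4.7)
The plan is to express the decision region $C_j$ as a preimage of an open convex set under a composition of single-hidden-layer maps, and then peel back one layer at a time using Lemma \ref{lem:inverse_layer}.

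First, since the output layer is affine, the condition defining $C_j$ depends on $x$ only through $u = f_{L-1}(x)$ and is an affine inequality in $u$. Concretely, I would introduce
\[
U_j \;=\; \bigl\{ u \in \RR^{n_{L-1}} \,:\, (W_L^T u + b_L)_j > (W_L^T u + b_L)_k \;\; \forall\, k \neq j \bigr\},
\]
which is a finite intersection of open half-spaces in $\RR^{n_{L-1}}$ and hence open and convex. In particular $U_j$ is open and connected (or empty, in which case $C_j$ is empty and the claim is trivial). By construction, $C_j = f_{L-1}^{-1}(U_j)$.

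Second, I would decompose $f_{L-1} = \phi_{L-1} \circ \phi_{L-2} \circ \cdots \circ \phi_1$, where $\phi_l : \RR^{n_{l-1}} \to \RR^{n_l}$ is the single-layer map $\phi_l(x) = \hat{\sigma}_l(W_l^T x + b_l)$ with $\hat{\sigma}_l$ as in \eqref{eq:hatsigma}. Iterated application of Proposition \ref{prop:inverse_compo} gives
\[
C_j \;=\; \phi_1^{-1}\bigl(\phi_2^{-1}\bigl(\cdots \phi_{L-1}^{-1}(U_j)\cdots\bigr)\bigr).
\]

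Third, each $\phi_l$ meets the hypotheses of Lemma \ref{lem:inverse_layer}: the pyramidal assumption $n_{l-1} \geq n_l$ is exactly the ``$m \geq n$'' condition; $W_l$ has full rank by hypothesis; and $\sigma_l$ is continuous, strictly monotonically increasing, with $\sigma_l(\RR) = \RR$, which together ensure $\sigma_l$ is bijective. A backward induction on $l = L-1, L-2, \ldots, 1$ starting from the open connected set $U_j$ then shows that at every stage the preimage in $\RR^{n_{l-1}}$ is again open and connected. After the final step we conclude that $C_j \subseteq \RR^d$ is open and connected, as required.

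I do not expect a real obstacle beyond being careful with the output layer, which has no activation and whose width $n_L = m$ is not controlled by the pyramidal assumption; absorbing it into the definition of $U_j$ at the level of $\RR^{n_{L-1}}$ circumvents this cleanly. The structural reason the argument is tight — and the reason the full-rank plus pyramidal hypotheses are both necessary — is exactly the phenomenon highlighted after Lemma \ref{lem:inverse_layer}: once some $n_{l-1} < n_l$, the range of $h_l$ is a proper affine subspace of $\RR^{n_l}$ and the intersection $\hat{\sigma}_l^{-1}(V) \cap \range(h_l)$ can disconnect, as in the toy example \eqref{eq:toyex} of Figure \ref{fig:disc}.
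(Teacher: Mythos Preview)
Your proposal is correct and follows essentially the same approach as the paper: define the winning set $U_j$ (the paper's $V_j$) in $\RR^{n_{L-1}}$ as a finite intersection of open half-spaces, write $C_j = f_{L-1}^{-1}(U_j)$, factor $f_{L-1}$ into single-layer maps, and peel back one layer at a time via Lemma~\ref{lem:inverse_layer} using Proposition~\ref{prop:inverse_compo}. Your handling of the output layer and your remark on why the pyramidal/full-rank hypotheses matter also match the paper's discussion.
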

\begin{proof}
    From Definition \ref{def:class_region}, it holds for every $1\leq j\leq m$ 
    \begin{align*}
    C_j \hspace{-1mm}&=\hspace{-1mm} \Setbar{x\in\RR^d}{f_{Lj}(x) - f_{Lk}(x) > 0,\forall k\neq j} 
    \end{align*} 
    where
    \begin{align*}
	&f_{Lj}(x)-f_{Lk}(x)\\
	&=\inner{(W_L)_{:j}-(W_L)_{:k}, f_{L-1}(x)} + {(b_L)}_j-{(b_L)}_k.
    \end{align*}
    Let us define the set
    $$V_j\!\!=\!\!\Setbar{y}{\inner{(W_L)_{:j}-(W_L)_{:k},y}\!>\!{(b_L)}_k-{(b_L)}_j,\forall k\neq j}$$
    then it holds
    $ C_j 
	= \Setbar{x\in\RR^d}{f_{L-1}(x)\in V_j} 
	= f_{L-1}^{-1} (V_j).
    $
    If $V_j$ is an empty set then we are done, 
    otherwise one observes that $V_j$ is the intersection of a finite number of open half-spaces (or the whole space), 
    which is thus an open and connected set. 
    Moreover, it holds $V_j \cap \hat{\sigma}_{L-1}(\RR)=V_j$, where $\hat{\sigma}_{L-1}$ is defined as in \eqref{eq:hatsigma}.
    It follows from Proposition \ref{prop:inverse_open} that $C_j$ must be an open set 
    as it is the pre-image of the open set $V_j$ under the continuous mapping $f_{L-1}.$
    To show that $C_j$ is a connected set, one first observes that
    \begin{align*}
	f_{L-1} = \hat{\sigma}_{L-1} \circ h_{L-1} \circ \hat{\sigma}_{L-2} \circ h_{L-2} \ldots \circ \hat{\sigma}_1 \circ h_1
    \end{align*}
    where $h_k:\RR^{n_{k-1}}\times\RR^{n_k}$ 
    is an affine mapping between layer $k-1$ and layer $k$ defined as $h_k(x)=W_k^T x + b_k$ for every $1\leq k\leq L-1, x\in\RR^{n_{k-1}}$,
    and $\hat{\sigma}_k:\RR^{n_k}\to\RR^{n_k}$ is the activation mapping of layer $k$ defined as in \eqref{eq:hatsigma}.
    By Proposition \ref{prop:inverse_compo} it holds that
    \begin{align*}
	f_{L-1}^{-1}(V_j) = (h_1^{-1} \circ \hat{\sigma}_1^{-1} \circ \ldots \circ h_{L-1}^{-1} \circ \hat{\sigma}_{L-1}^{-1})(V_j)
    \end{align*}
    Since $\sigma_k:\RR\to\RR$ is a continuous bijection by our assumption, 
    it follows that $\hat{\sigma}_k:\RR^{n_k}\to\RR^{n_k}$ is also a continuous bijection.
    Moreover, it holds that $W_k$ has full rank and $n_{k-1}\geq n_k$ for every $1\leq k\leq L-1$ and $V_j$ is a connected set.
    Thus one can apply Lemma \ref{lem:inverse_layer} subsequently for the composed functions $(\hat{\sigma}_k\circ h_k)$
    for every $k=L-1,L-2,\ldots,1$ and obtains that $C_j=f_{L-1}^{-1}(V_j)$ is a connected set.
    Thus $C_j$ is an open and connected set for every $1\leq j\leq m.$
\end{proof}
The next theorem holds just for networks with one hidden layer but allows general activation functions which are continuous
and strictly monotonically increasing, that is leaky ReLU, ELU, softplus or sigmoid activation functions. Again the decision
regions are connected if the hidden layer has maximal width smaller than $d+1$.
\begin{theorem}\label{theo:class_region_one} 
    Let the one hidden layer network satisfy $d=n_0 \geq n_1$
    and let $\sigma_1:\RR\to\RR$ be a continuous, strictly monotonically increasing function
    and the hidden layer's weight matrix $W_1$ has full rank.
    Then every decision region $C_j$ is an open connected subset of $\RR^d$ for every $1\leq j\leq m.$
\end{theorem}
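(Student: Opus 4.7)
The plan is to mimic the proof of Theorem \ref{theo:class_region}, but with one extra twist that handles the fact that $\sigma_1(\RR)$ need not be all of $\RR$. As in that proof, I would start by writing $C_j = f_1^{-1}(V_j)$, where $V_j$ is the open convex polytope in $\RR^{n_1}$ cut out by the finitely many linear inequalities $\langle (W_L)_{:j}-(W_L)_{:k}, y\rangle > (b_L)_k-(b_L)_j$ for $k\neq j$ (or $V_j=\RR^{n_1}$ if $m=1$). Being convex, $V_j$ is connected, and being defined by strict linear inequalities, it is open. Openness of $C_j$ then follows immediately from Proposition \ref{prop:inverse_open} applied to the continuous map $f_1 = \hat\sigma_1\circ h_1$.

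The heart of the proof is connectedness of $C_j = h_1^{-1}(\hat\sigma_1^{-1}(V_j))$ (using Proposition \ref{prop:inverse_compo}). The main obstacle compared to Theorem \ref{theo:class_region} is that $\hat\sigma_1$ is no longer a bijection on $\RR^{n_1}$, so Lemma \ref{lem:inverse_layer} does not apply directly. The key observation is that, because $\sigma_1$ is continuous and strictly monotonically increasing, $\sigma_1(\RR)$ is an open interval in $\RR$, hence $\hat\sigma_1(\RR^{n_1})=\sigma_1(\RR)^{n_1}$ is an open box in $\RR^{n_1}$, which is convex. Therefore
\begin{align*}
\hat\sigma_1^{-1}(V_j) \;=\; \hat\sigma_1^{-1}\bigl(V_j\cap \sigma_1(\RR)^{n_1}\bigr),
\end{align*}
and the set $V_j\cap \sigma_1(\RR)^{n_1}$ is the intersection of two open convex sets, hence open, convex, and in particular connected.

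Next, $\hat\sigma_1$ viewed as a map $\RR^{n_1}\to \sigma_1(\RR)^{n_1}$ is a continuous bijection whose inverse $\hat\sigma_1^{-1}$ is continuous on $\sigma_1(\RR)^{n_1}$ by Proposition \ref{lem:inverseContFct} applied componentwise. Thus $\hat\sigma_1^{-1}(V_j\cap\sigma_1(\RR)^{n_1})$ is the image of the connected set $V_j\cap\sigma_1(\RR)^{n_1}$ under the continuous map $\hat\sigma_1^{-1}$, and is therefore connected by Proposition \ref{prop:connected_continuous_map}. Call this set $U$.

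Finally I handle the pre-image under the affine map $h_1$. Since $n_0\geq n_1$ and $W_1$ has full rank, $\range(h_1)=\RR^{n_1}$, and exactly as in the proof of Lemma \ref{lem:inverse_layer} one has
\begin{align*}
h_1^{-1}(U) \;=\; W_1(W_1^T W_1)^{-1}(U-b_1) \;+\; \ker(W_1^T).
\end{align*}
The first summand is the image of the connected set $U$ under an affine map, hence connected by Proposition \ref{prop:connected_continuous_map}, and the second summand is a linear subspace, hence connected. By Proposition \ref{prop:minkowski} their Minkowski sum is connected, so $C_j=h_1^{-1}(U)$ is connected, completing the proof. The only genuinely new ingredient relative to Theorem \ref{theo:class_region} is the convexity-based argument that lets us replace $V_j$ by $V_j\cap\sigma_1(\RR)^{n_1}$ without losing connectedness; this is exactly what fails in the multi-layer case where the inner sets $V_k$ are no longer convex, which is why Theorem \ref{theo:class_region_one} is restricted to a single hidden layer.
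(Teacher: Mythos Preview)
Your proposal is correct and follows essentially the same approach as the paper: both identify $V_j\cap\hat\sigma_1(\RR^{n_1})$ as an intersection of two open convex sets (hence open, convex, connected), and then push this through $\hat\sigma_1^{-1}$ and $h_1^{-1}$ using the continuity of $\hat\sigma_1^{-1}$ on its range (Proposition \ref{lem:inverseContFct}) together with the affine pre-image computation from Lemma \ref{lem:inverse_layer}. The only difference is cosmetic: the paper compresses your last two paragraphs into a one-line appeal to Lemma \ref{lem:inverse_layer} with the remark that $\hat\sigma_1^{-1}$ is continuous on $\hat\sigma_1(\RR^{n_1})$, whereas you spell out the Minkowski-sum argument explicitly.
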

\begin{proof}
We note that in the proof of Theorem \ref{theo:class_region} the  $V_j$ is a finite intersection of open half-spaces and thus a convex set.
Moreover, $\hat{\sigma}_1(\RR^{n_1})$ is an open convex set (it is just an axis-aligned open box), as $\sigma_1$ is strictly
monotonically increasing. Thus
\begin{align*}
	C_j 
	&= \Setbar{x\in\RR^d}{f_{1}(x)\in V_j \cap \hat{\sigma}_1(\RR^{n_1})} \\
	&= f_{1}^{-1} \big(V_j \cap \hat{\sigma}_1(\RR^{n_1})\big).
    \end{align*}
As both sets are open convex sets, the intersection $V_j \cap \hat{\sigma}_1(\RR^{n_1})$ is again convex and open as well. 
Thus $V_j \cap \hat{\sigma}_1(\RR^{n_1})$ is a connected set. The rest of the argument follows then by using Lemma \ref{lem:inverse_layer},
noting that by Proposition \ref{lem:inverseContFct} ${\hat{\sigma}_1}^{-1}: \hat{\sigma}_1(\RR^{n_1}) \rightarrow \RR^{n_1}$ is a continuous mapping.
\end{proof}
Note that Theorem \ref{theo:class_region} and Theorem \ref{theo:class_region_one} make no assumption on the structure of all layers in the network.
Thus they can be applied to neural networks with both fully connected layers and convolutional layers.
Moreover, the results hold regardless of how the parameters of the network $(W_l,b_l)_{l=1}^L$ have been attained, 
trained or otherwise, as long as all the weight matrices of hidden layers have full rank.
This is a quite weak condition in practice as the set of low rank matrices has just Lebesgue measure zero. Even if the optimal weight parameters
for the data generating distribution would be low rank (we discuss such an example below), then it is very unlikely that the trained weight parameters are low rank, as one has
statistical noise by the training sample, ``optimization noise'' from the usage of stochastic gradient descent (SGD)
and its variants and finally
in practice one often uses early stopping and thus even if the optimal solution for the training set is low rank,
one will not find it.

Theorem  \ref{theo:class_region} covers activation functions like leaky  ReLU but not sigmoid, ELU or softplus. At the moment it is unclear
for us if the result might hold also for the more general class of activation functions treated in Theorem \ref{theo:class_region_one}. 
The problem is that then in Lemma  \ref{lem:inverse_layer} one has to compute the pre-image of $V \cap \hat{\sigma}(\RR^n)$. Even
though both sets are connected, the intersection of connected sets need not be connected. This is avoided in Theorem \ref{theo:class_region_one}
by using that the initial set $V_j$ and $\hat{\sigma}(\RR^{n_{L-1}})$ 
are both convex and the intersection of convex sets is convex and thus connected.

We show below that the result is tight in the sense that we give an empirical example of a neural network with a single hidden layer of $d+1$ hidden units
which produces disconnected regions. 
Note that our result complements the result of \cite{Hanin2017}, 
where they show the universal approximation property (for ReLU) only if
one considers networks of width at least $d+1$ for arbitrary depth. 
Theorem \ref{theo:class_region} and Theorem \ref{theo:class_region_one} indicate that this result could also hold for leaky ReLU as
approximation of arbitrary functions implies approximation of arbitrary decisions regions, which clearly requires that one is able to get disconnected decision regions.
Taking both results together, it seems rather obvious that as a general guiding principle for the construction of hidden layers in neural networks 
one should use, at least for the first hidden layer, more units than the input dimension, 
as it is rather unlikely that the Bayes optimal decision regions are connected. 
Indeed, if the true decision regions are disconnected
then using a network of smaller width than $d+1$ might still perfectly fit the finite training data 
but since the learned decision regions are connected there exists a path between the
true decision regions which then can be used for potential adversarial manipulation.
This is discussed in the next section where we show empirical evidence for the existence of such adversarial examples.

\section{Illustration and Discussion}
In this section we discuss with analytical examples as well as trained networks that the result is tight
and the conditions of the theorem cannot be further relaxed. Moreover, we argue that connected decision regions
can be problematic as they open up the possibility to generate adversarial examples.

\subsection{Why pyramidal structure of the network is necessary to get connected decision regions?}\label{sec:non_pyramidal}
\begin{figure}[ht]
\vspace{-5pt}
\centering
    \includegraphics[width=0.4\linewidth]{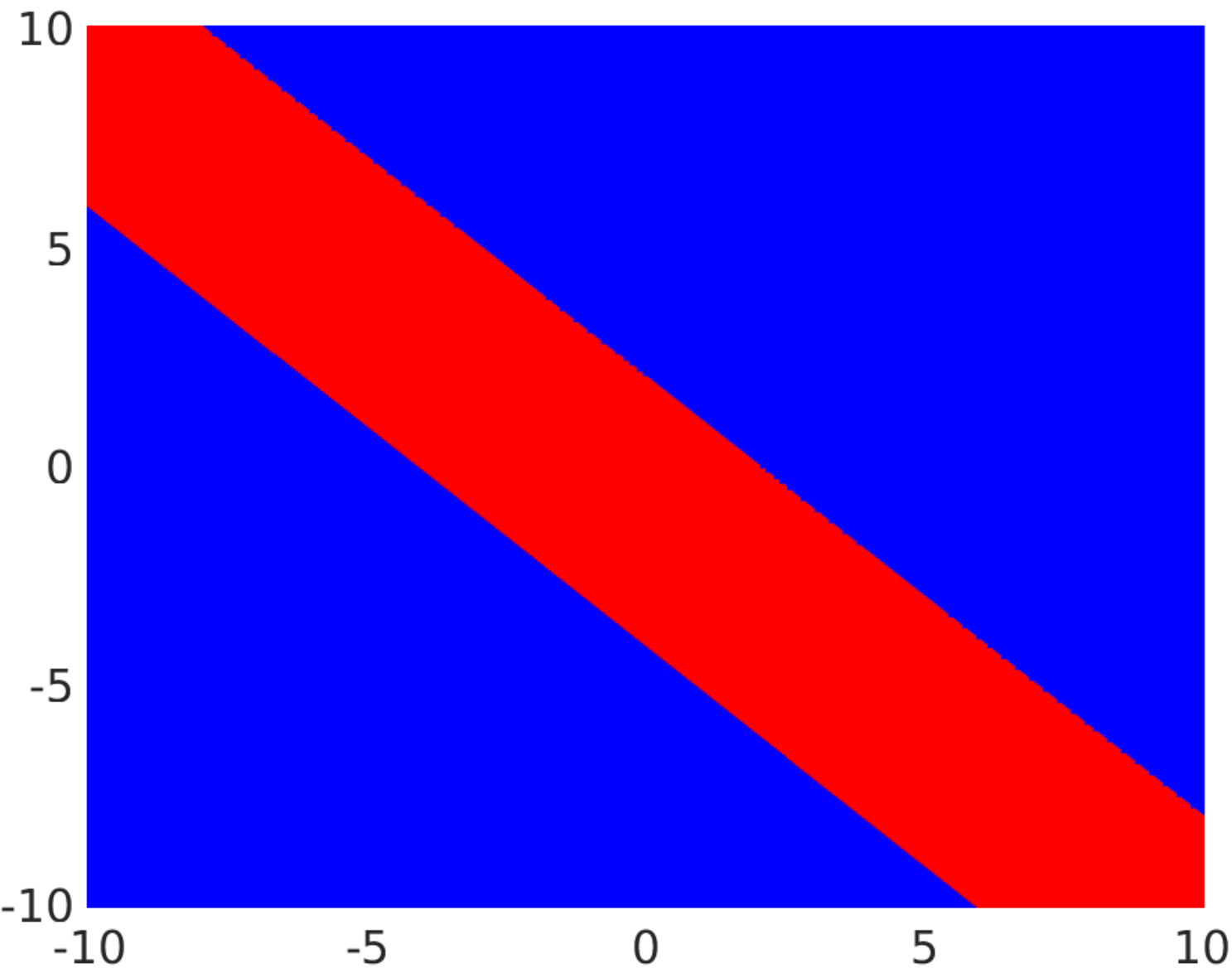}
    \hspace{5pt}
    \includegraphics[width=0.42\linewidth]{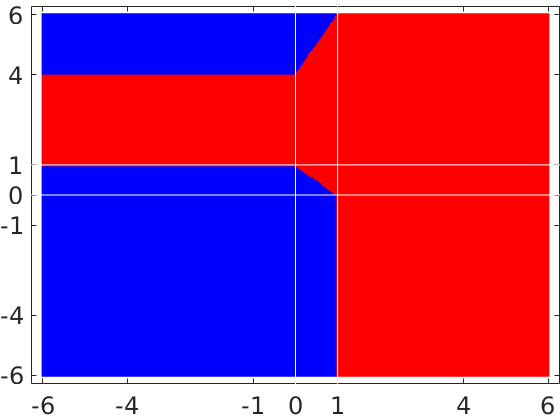}
\caption{Decision region of the network in \eqref{eq:non_pyramidal}(left) and \eqref{eq:net_relu}(right).}
\label{fig:relu}
\vspace{-5pt}
\end{figure}

%

In Theorem \ref{theo:class_region}, if the network does not have pyramidal structure up to the last hidden layer, 
\ie the condition $d_1\geq \ldots\geq d_{L-1}$ is not fulfilled, then the statement of the theorem
might not hold as the decision regions can be disconnected.
We illustrate this via a counter-example below.
Let us consider a non-pyramidal network 2-1-2-2 defined as
\begin{align}\label{eq:non_pyramidal}
    W_3^T\hat{\sigma}_2(W_2^T\hat{\sigma}_1(W_1^Tx+b_1)+b_2)+b_3
\end{align}
where $\sigma_1(t)=\sigma_2(t)=\max\Set{0.5\,t, t}$,
and
$		
    W_1=\begin{bmatrix}1\\1\end{bmatrix}, b_1=0, 
    W_2=\begin{bmatrix}1&-1\end{bmatrix}, b_2=\begin{bmatrix}0\\0\end{bmatrix}, 
    W_3=\begin{bmatrix}2&1\\3&2\end{bmatrix}, b_3=\begin{bmatrix}0\\1\end{bmatrix} .
$
Then one can check that this network has (see appendix for the full derivation)
$C_1 = \Setbar{x\in\RR^2}{x_1+x_2-2>0 \textrm{ and } x_1+x_2+4<0}$,
which is a disconnected set as illustrated in Figure \ref{fig:relu}.

\subsection{Why full rank of the weight matrices is necessary to get connected decision regions?}
Similar to Section \ref{sec:non_pyramidal}, we show that if the weight matrices of hidden layers are not full rank while the other conditions 
are still satisfied, then the decision regions can be disconnected. 
The reason is simply that low rank matrices, in particular in the first layer, reduce the effective
dimension of the input. We illustrate this effect with a small analytical example and then argue that nevertheless in 
practice it is extremely difficult to get low rank weight matrices.

\begin{figure}[ht]
\centering
    \includegraphics[width=0.4\linewidth]{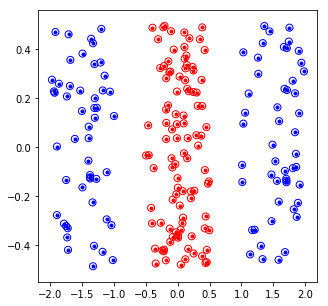}
    \quad 
    \includegraphics[width=0.4\linewidth]{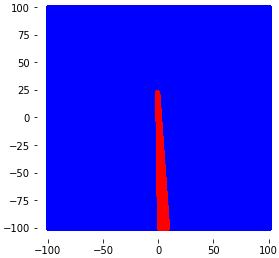}
\vspace{-5pt}
\caption{Left: the training set corresponding to the distribution in \eqref{onedproblem}. Right: the decision regions
of a trained classifier, which are connected as the learned weight matrix $W_1$ has full rank.}
\label{fig:illu1}
\vspace{-5pt}
\end{figure}
\vspace{-5pt}

\begin{figure*}[ht]
\begin{center}
    \subfigure[Training data]{
	\includegraphics[width=0.15\linewidth]{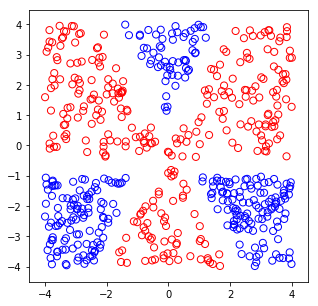}
    }
    \subfigure[$n_1=2 (122)$]{
      \includegraphics[width=0.15\textwidth]{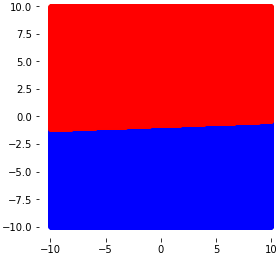}
    }
    \subfigure[$n_1=3 (72)$]{
      \includegraphics[width=0.15\textwidth]{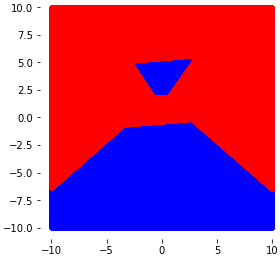}
    }
    \subfigure[$n_1=4 (45)$]{
      \includegraphics[width=0.14\textwidth]{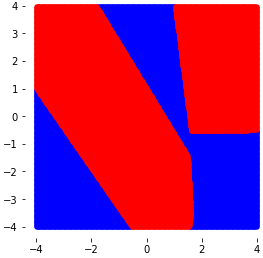}
    }
    \subfigure[$n_1=7 (12)$]{
      \includegraphics[width=0.15\textwidth]{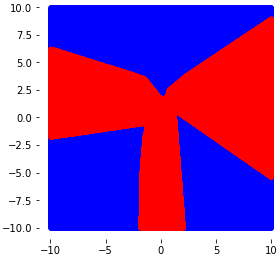}
    }
    \subfigure[$n_1=50 (0)$]{
      \includegraphics[width=0.15\textwidth]{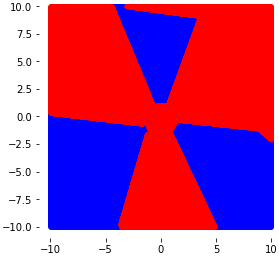}
    }
\end{center}
\vspace{-7pt}
\caption{Decision region of a one hidden layer network trained with SGD for varying number of hidden units
for the toy example given in (a). As shown by Theorem \ref{theo:class_region} the decision
region for $n_1=d=2$ is connected, however already for $n_1>d=2$ one gets disconnected decision regions which shows
that Theorem \ref{theo:class_region} is tight. The numbers in bracket show the number of misclassified training points.
}
\label{fig:toy2d}
\vspace{-5pt}
\end{figure*}

Suppose one has a two-class classification problem on $\RR^2$ (see Figure \ref{fig:illu1}) with equal class probabilities $P(\textrm{red})=P(\textrm{blue})$,
and the conditional distribution is given as
\begin{align}\label{onedproblem}
 p(x_1,x_2|\mathrm{blue})\hspace{-1mm} &=\hspace{-1mm} \frac{1}{2}, \,\forall \, x_1 \in [-2,-1] \cup [1,2], x_2 \in [-\frac{1}{2},\frac{1}{2}] \nonumber\\
 p(x_1,x_2|\mathrm{red}) \hspace{-1mm}&=\hspace{-1mm} 1, \,\forall\, x_1 \in [-1,1], x_2 \in [-\frac{1}{2},\frac{1}{2}].
\end{align}
Note that the Bayes optimal decision region for class blue is disconnected. Moreover, it is easy to verify that a one
hidden layer network with leaky ReLU $\sigma(t)=\max\{t,\alpha t\}$ for $0<\alpha<1$ can perfectly fit the data with 
\[ W^T_1 \hspace{-1mm}=\hspace{-1mm} \begin{pmatrix} 1 & 0\\-1 & 0 \end{pmatrix}, 
b_1\hspace{-1mm}=\hspace{-1mm}\begin{pmatrix}-1\\-1\end{pmatrix},
W^T_2 \hspace{-1mm}=\hspace{-1mm}\begin{pmatrix} 1 & 1 \\ 0 & 0\end{pmatrix}, 
b_2\hspace{-1mm}=\hspace{-1mm}\begin{pmatrix}0\\-2\alpha\end{pmatrix}\]
Note that $W_1$ has low rank.
Suppose that the first output unit corresponds to the blue class and second output unit corresponds to the red class.
Then it holds $(f_2)_{\mathrm{red}}(x_1,x_2) =-2\alpha$,
$(f_2)_{\mathrm{blue}}(x_1,x_2) = \max\{x_1-1,\alpha (x_1-1)\} + \max\{-(x_1+1), -\alpha (x_1+1)\}$
and thus 
\begin{align*}
(f_2)_{\mathrm{blue}}(x_1,x_2)\hspace{-1mm}=\hspace{-1mm}
\begin{cases} (1-\alpha)x_1 - (1+\alpha) & x_1\geq 1\\ 
-2\alpha & -1\leq x_1\leq 1\\
-(1-\alpha)x_1-(1+\alpha) & x_1\leq -1
\end{cases}
\end{align*}
which implies that $(f_2)_{\mathrm{blue}}(x_1,x_2)>(f_2)_{\mathrm{red}}(x_1,x_2)$ for every $x_1\in(-\infty,-1)\cup(1,+\infty)$
and thus the decision region for class blue has two disconnected decision regions. 
This implies that Theorems \ref{theo:class_region} and \ref{theo:class_region_one} do indeed not hold 
if the weight matrices do not have full rank. 
Nevertheless in practice, it is unlikely that one will get such low rank weight matrices, 
which we illustrate in Figure \ref{fig:illu1} that the decision regions of the trained classifier has indeed connected decision regions. 
This is due to statistical noise in the training set as
well as through the noise in the optimization procedure (SGD) and the common practice of early stopping in training of neural networks.

\subsection{Does the result hold for ReLU activation function?}
As the conditions of Theorem \ref{theo:class_region} are not fulfilled for ReLU, 
one might ask whether the decision regions of a ReLU network with pyramidal structure and full rank weight matrices
can be potentially disconnected.
We show that this is indeed possible via the following example.
Let a two hidden layer network (2-2-2-2) be defined as
\begin{align}\label{eq:net_relu}
    W_3^T\hat{\sigma}_2(W_2^T\hat{\sigma}_1(W_1^Tx+b_1)+b_2)+b_3
\end{align}
where  $\sigma_1(t)=\sigma_2(t)=\max\Set{t, 0}$ and
\begin{align*}		
    W_1^T=\begin{bmatrix}1&0\\0&1\end{bmatrix}, 
    W_2^T=\frac{\sqrt{2}}{2}\begin{bmatrix}1&1\\-1&1\end{bmatrix}, 
    W_3^T=\begin{bmatrix}-1&0\\ 0&-3\end{bmatrix}, 
\end{align*}
and $b_1=[0,0]^T, b_2=\frac{1}{\sqrt{2}}[\sqrt{2}-1, -3]^T, b_3=[1,0]^T.$
Then one can derive the decision region for the first class as (see appendix for the full derivation)
\begin{align*}
    C_1 
    &= \Setbar{x\in\RR^2} {x_1<1,\;x_2<1,\;x_1+x_2<1} 
    \\ 
    &\;\cup \Setbar{x\in\RR^2}{x_2>4,\;2x_1-x_2+4<0} 
\end{align*}
which is a disconnected set as illustrated in Figure \ref{fig:relu}.

Finally, one notes in this example that except for the activation function, 
all the other conditions of Theorem \ref{theo:class_region} are still satisfied, that is, 
the network has pyramidal structure (2-2-2-2) and all the weight matrices $(W_l)_{l=1}^{2}$ have full rank by our construction.
Thus the statement of Theorem \ref{theo:class_region}, at least under current form, does not hold for ReLU.


\subsection{The theorems are tight: disconnected decision regions for width $d+1$} 
We consider a binary classification task in $\RR^2$ where the data points are generated 
so that the blue class has disconnected components on the square $[-4,4]\times [-4,4]$, see Figure \ref{fig:toy2d} (a) for an illustration.
We use a one hidden layer network with varying number of hidden units,
two output units, leaky ReLU activation function and cross-entropy loss.
We then train this network by using SGD with momentum for $1000$ epochs and learning rate $0.1$ and reduce the it 
by a factor of $2$ after every $50$ epochs. 
For all the attempts with different starting points that we have done in our experiment, 
the resulting weight matrices always have full rank.

We show the training error and the decision regions of trained network in Figure \ref{fig:toy2d}.
The grid size in each case of Figure \ref{fig:toy2d} 
has been manually chosen so that one can see clearly the connected/disconnected components in the decision regions.
First, we observe that for two hidden units ($n_1=2$), the network satisfies the condition of Theorem \ref{theo:class_region}
and thus can only learn connected regions, which one can also clearly see in the figure, where one basically gets a linear separator.
However, for three hidden units ($n_1=3$), one can see that the network can produce disconnected decision regions,
which shows that both our Theorems \ref{theo:class_region} and \ref{theo:class_region_one} are tight, in the sense that width $d+1$
is already sufficient to produce disconnected components, whereas the results say that for width less than $d+1$ the decision regions
have to be connected.
As the number of hidden units increases, we observe  that the network produces more easily disconnected decision regions as expected.

\subsection{Relation to adversarial manipulation}\label{sec:adversarial}
We use a single image of digit $1$ from the MNIST dataset to create a new artificial dataset where the underlying data generation probability measure has a similar one-dimensional structure as in \eqref{onedproblem} but now embedded in the pixel space $\RR^{28 \times 28}$.
This is achieved by using rotation as the one-dimensional degree of freedom. 
We generate $2000$ training images for each red/blue class by rotating the chosen digit $1$ with angles ranging from $[-5^{\circ},5^{\circ}]$ 
for the read class, and $[-20^{\circ},-15^{\circ}]\,\cup\,[15^{\circ},20^{\circ}]$ for the blue class, see Figure \ref{fig:fig_classifications}. 
\begin{figure}[h]
\vspace{-3pt}
\centering  
\scriptsize
\stackunder{\label{fig:origin}\includegraphics[width=0.085\textwidth]{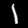}}{\textcolor{blue}{$[-20^{\circ},-15^{\circ}]$}}
\hspace{10pt}
\stackunder{\label{fig:main}\includegraphics[width=0.085\textwidth]{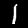}}{\textcolor{red}{$[-5^{\circ},5^{\circ}]$}}
\hspace{10pt}
\stackunder{\label{fig:target}\includegraphics[width=0.085\textwidth]{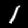}}{\textcolor{blue}{$[15^{\circ},20^{\circ}]$}}
\vspace{-3pt}
\caption{Training examples for our binary digit-$1$ dataset. 
The color (red/blue) denotes the class of corresponding example.
}
\label{fig:fig_classifications}
\vspace{-5pt}
\end{figure}

\begin{figure*}[t]
\centering 
\scriptsize
\stackunder{\includegraphics[width=0.085\textwidth]{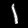}}{\textcolor{blue}{$0.99$ (source)}}
\stackunder{\includegraphics[width=0.085\textwidth]{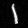}}{\textcolor{blue}{$0.99 (0.1)$}}
\stackunder{\includegraphics[width=0.085\textwidth]{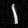}}{\textcolor{blue}{$0.99 (0.2)$}}
\stackunder{\includegraphics[width=0.085\textwidth]{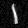}}{\textcolor{blue}{$0.99 (0.3)$}}
\stackunder{\includegraphics[width=0.085\textwidth]{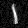}}{\textcolor{blue}{$0.99 (0.4)$}}
\stackunder{\includegraphics[width=0.085\textwidth]{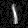}}{\textcolor{blue}{$0.99 (0.5)$ }}
\stackunder{\includegraphics[width=0.085\textwidth]{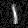}}{\textcolor{blue}{$0.99 (0.6)$ }}
\stackunder{\includegraphics[width=0.085\textwidth]{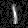}}{\textcolor{blue}{$0.99 (0.7)$}}
\stackunder{\includegraphics[width=0.085\textwidth]{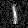}}{\textcolor{blue}{$0.99 (0.8)$}}
\stackunder{\includegraphics[width=0.085\textwidth]{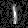}}{\textcolor{blue}{$0.99 (0.9)$}}
\stackunder{\includegraphics[width=0.085\textwidth]{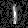}}{\textcolor{blue}{$0.99$ (\red{adversarial})}}
\vspace{5pt}
\stackunder{\includegraphics[width=0.085\textwidth]{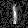}}{\textcolor{blue}{$0.99$ (\red{adversarial})}}
\stackunder{\includegraphics[width=0.085\textwidth]{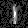}}{\textcolor{blue}{$0.99 (0.1)$}}
\stackunder{\includegraphics[width=0.085\textwidth]{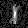}}{\textcolor{blue}{$0.99 (0.2)$}}
\stackunder{\includegraphics[width=0.085\textwidth]{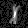}}{\textcolor{blue}{$0.99 (0.3)$}}
\stackunder{\includegraphics[width=0.085\textwidth]{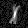}}{\textcolor{blue}{$0.99 (0.4)$}}
\stackunder{\includegraphics[width=0.085\textwidth]{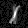}}{\textcolor{blue}{$0.99 (0.5)$ }}
\stackunder{\includegraphics[width=0.085\textwidth]{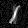}}{\textcolor{blue}{$0.99 (0.6)$ }}
\stackunder{\includegraphics[width=0.085\textwidth]{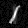}}{\textcolor{blue}{$0.99 (0.7)$}}
\stackunder{\includegraphics[width=0.085\textwidth]{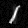}}{\textcolor{blue}{$0.99 (0.8)$}}
\stackunder{\includegraphics[width=0.085\textwidth]{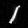}}{\textcolor{blue}{$0.99 (0.9)$}}
\stackunder{\includegraphics[width=0.085\textwidth]{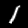}}{\textcolor{blue}{$0.99$ (target)}}
\vspace{-7pt}
\caption{ 
Digit-1 dataset ($2$ output classes): The trajectory from source image to adversarial image (top row) parameterized by $\lambda$ (numbers inside brackets), 
and from adversarial image to target image (second row). 
Each number outside bracket shows the confidence that the corresponding image was predicted to be in blue class.
The image with red caption can be seen as an adversarial image of the red class.
}
\label{fig:digit_1_trajectory_1}
\vspace{-5pt}
\end{figure*}

\begin{figure*}[t]
\centering 
\scriptsize
\stackunder{\includegraphics[width=0.085\textwidth]{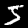}}{\textcolor{blue}{$0.99$ (source)}}
\stackunder{\includegraphics[width=0.085\textwidth]{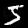}}{\textcolor{blue}{$0.99 (0.1)$}}
\stackunder{\includegraphics[width=0.085\textwidth]{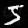}}{\textcolor{blue}{$0.98 (0.2)$}}
\stackunder{\includegraphics[width=0.085\textwidth]{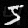}}{\textcolor{blue}{$0.98 (0.3)$}}
\stackunder{\includegraphics[width=0.085\textwidth]{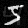}}{\textcolor{blue}{$0.97 (0.4)$}}
\stackunder{\includegraphics[width=0.085\textwidth]{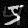}}{\textcolor{blue}{$0.96 (0.5)$ }}
\stackunder{\includegraphics[width=0.085\textwidth]{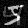}}{\textcolor{blue}{$0.94 (0.6)$ }}
\stackunder{\includegraphics[width=0.085\textwidth]{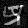}}{\textcolor{blue}{$0.93 (0.7)$}}
\stackunder{\includegraphics[width=0.085\textwidth]{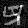}}{\textcolor{blue}{$0.92 (0.8)$}}
\stackunder{\includegraphics[width=0.085\textwidth]{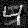}}{\textcolor{blue}{$0.89 (0.9)$}}
\stackunder{\includegraphics[width=0.085\textwidth]{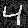}}{\textcolor{blue}{$0.82$ (\red{adversarial})}}
\vspace{5pt}
\stackunder{\includegraphics[width=0.085\textwidth]{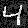}}{\textcolor{blue}{$0.82$ (\red{adversarial})}}
\stackunder{\includegraphics[width=0.085\textwidth]{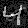}}{\textcolor{blue}{$0.55 (0.1)$}}
\stackunder{\includegraphics[width=0.085\textwidth]{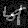}}{\textcolor{blue}{$0.45 (0.2)$}}
\stackunder{\includegraphics[width=0.085\textwidth]{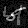}}{\textcolor{blue}{$0.54 (0.3)$}}
\stackunder{\includegraphics[width=0.085\textwidth]{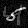}}{\textcolor{blue}{$0.68 (0.4)$}}
\stackunder{\includegraphics[width=0.085\textwidth]{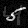}}{\textcolor{blue}{$0.83 (0.5)$ }}
\stackunder{\includegraphics[width=0.085\textwidth]{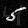}}{\textcolor{blue}{$0.91 (0.6)$ }}
\stackunder{\includegraphics[width=0.085\textwidth]{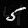}}{\textcolor{blue}{$0.96 (0.7)$}}
\stackunder{\includegraphics[width=0.085\textwidth]{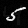}}{\textcolor{blue}{$0.98 (0.8)$}}
\stackunder{\includegraphics[width=0.085\textwidth]{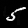}}{\textcolor{blue}{$0.99 (0.9)$}}
\stackunder{\includegraphics[width=0.085\textwidth]{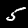}}{\textcolor{blue}{$0.99$ (target)}}
\vspace{-7pt}
\caption{ 
MNIST dataset ($10$ output classes): The trajectory from source image to adversarial image (top row) parameterized by $\lambda$ (numbers inside brackets), 
and from adversarial image to target image (second row). 
Each number outside bracket shows the confidence that the corresponding image was predicted to be in blue class (digit $5$) 
out of $10$ classes.
}
\label{fig:mnist_trajectory_1}
\end{figure*}
Note that this is a binary classification task where the dataset has just one effective degree of freedom 
and the Bayes optimal decision regions are disconnected. 
We train a one hidden layer network with $784$ hidden units
which is equal to the input dimension and leaky ReLU as activation function with $\alpha=0.1$. 
The training error is zero and the resulting weight matrices have full rank, 
thus the conditions of Theorem \ref{theo:class_region} are satisfied and the decision region of class blue should
be connected even though the Bayes optimal decision region is disconnected. 
This can only happen by establishing a connection around the other red class.
We test this by sampling a source image from the $[-20^{\circ},-15^{\circ}]$ part of the blue class 
and a target image from the other part $[15^{\circ},20^{\circ}]$.
Next, we generate an adversarial image 
\footnote{This is essentially a small perturbation of an image from the red class which is classified as blue class} 
from the red class using the one step target class method \cite{KurakinEtal2016, KurakinEtal2017} 
and consider the path between the source image to the adversarial image and subsequently from the adversarial image to the target one.
For each path, we simply consider the line segment $\lambda s + (1-\lambda)t$ for $\lambda \in [0,1]$ between
the two endpoint images $s$ and $t$ and sample it very densely by dividing $[0,1]$ into $10^4$ equidistant parts. 
Figure \ref{fig:digit_1_trajectory_1} shows the complete path from the source image to the target image 
where the color indicates that all the intermediate images are classified as blue with high confidence 
(note that we turned the output of the network into probabilities by using the softmax function). 
Moreover, the intermediate images from Figure \ref{fig:digit_1_trajectory_1} look very much like images from the red class 
thus could be seen as adversarial samples for the red class. 
The point we want to make here is that one might think that in order to 
avoid adversarial manipulation the solution is to use a simple classifier of low capacity. 
We think that rather the opposite is true in the sense that only if the classifier is rich enough to model the true underlying data generating distribution it will be able to model the true decision
boundaries. In particular, the classifier should be able to realize disconnected decision regions in order to avoid paths through the input space which connect different disconnected regions of the Bayes optimal classifier. Now one could argue that the problem of our synthetic example
is that the corresponding digits obviously do not fill the whole image space, nevertheless the classifier has to do a prediction for all possible images. This could be handled by introducing a background class, but then it would be even more important that the classifier can produce
disconnected decision regions which naturally requires a minimal width of $d+1$ of the network. 

In Figure \ref{fig:mnist_trajectory_1}, we show another similar experiment on MNIST dataset, but now for all the $10$ image classes.
We train a network with $200$ hidden units, leaky ReLU and softmax cross-entropy loss to zero training error.
Once again, one can see that there exists a continuous path that connects two different-looking images of digit $5$ (blue class)
where every image along this path is classified as blue class with high confidence.
Moreover this path goes through a pre-constructed adversarial image of the red class (digit $4$).

\section{Conclusion}
We have shown that deep neural networks (with a certain class of activation functions) need to have in general 
width larger than the input dimension in order to learn disconnected decision regions. It remains an open
problem if our current requirement $\sigma(\RR)=\RR$ can be removed. 
While our result does not resolve
the question how to choose the network architecture in practice, it provides at least a guideline how to choose the width
of the network. Moreover, our result and experiments show that too narrow networks produce high confidence predictions 
on a path connecting the true disconnected decision regions which could be used to attack these networks using adversarial
manipulation.

\section*{Acknowledgements}
The authors would like to thank the reviewers for their helpful comments on the paper
and Francesco Croce for bringing up a counter-example for the ReLU activation function.

\bibliography{regul}
\bibliographystyle{icml2018}

\ifpaper
\appendix
\textbf{Proof of Proposition 3.3}
    Pick some $a,b\in f(A).$
    Then there must exist some $x,y\in A$ such that $f(x)=a$ and $f(y)=b.$
    Since $A$ is a connected set, it holds by Definition 3.2 that
    there exists a continuous curve $r: [0,1]\to A$ such that $r(0)=x,r(1)=y.$  Consider the curve $f\circ r:[0,1]\to f(A)$,
    then it holds that $f(r(0))=a, f(r(1))=b.$  Moreover, $f\circ r$ is continuous as both $f$ and $r$ are continuous.
    Thus it holds that $f(A)$ is a connected set by Definition 3.2.

\paragraph{Proof of Proposition 3.4}
    Let $x,y\in U+V$, then there exists $a,b\in U$ and $c,d\in V$
    such that $x=a+c,y=b+d.$
    Since $U$ and $V$ are connected sets, there exist two continuous curves $p:[0,1]\to U$
    and $q:[0,1]\to V$ such that 
    $p(0)=a,p(1)=b$ and $q(0)=c,q(1)=d.$
    Consider the continuous curve $r(t)\bydef p(t)+q(t)$ then 
    it holds that $r(0)=a+c=x, r(1)=b+d=y$ and $r(t)\in U+V$ for every $t\in[0,1].$
    This implies that every two elements in $U+V$ can be connected by a continuous curve
    and thus $U+V$ must be a connected set.

\paragraph{Proof of Proposition 3.6}
    It holds for every $A\subseteq Q$ that
    \begin{align*}
	(g\circ f)^{-1}(A) 
	&= \Setbar{x\in U}{g(f(x))\in A} \\
	&= \Setbar{x\in U}{f(x)\in g^{-1}(A)} \\
	&= \Setbar{x\in U}{x\in f^{-1}(g^{-1}(A))} \\
	&= (f^{-1}\circ g^{-1})(A) .
    \end{align*}

\section*{Why pyramidal structure of the network is necessary to get connected decision regions?}
In the following, we show how to derive the decision regions of the network given in Equation \eqref{eq:non_pyramidal}.
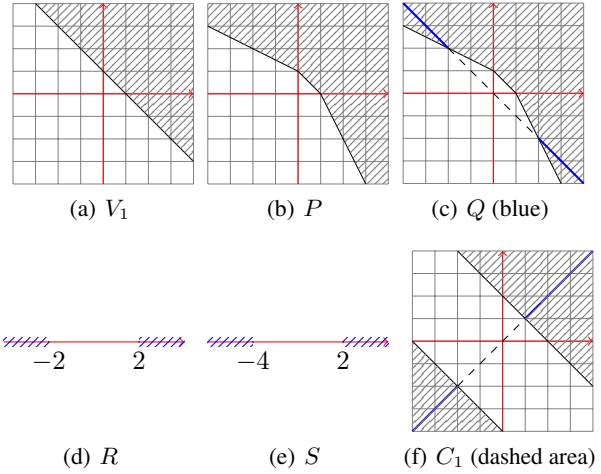
\begin{figure}[ht]
\begin{center}
    \subfigure[$V_1$]{
    \begin{tikzpicture}[scale=0.3]
	\draw[step=1cm,gray,very thin] (-4,-4) grid (4,4);
	\draw[->,red] (-4,0) -- (4,0);
	\draw[->,red] (0,-4) -- (0,4);
	\draw[-] (-3,4)--(4,-3);
	\fill[pattern=north east lines, pattern color=gray] (-3,4)--(4,4)--(4,-3);
    \end{tikzpicture}}
    \subfigure[$P$]{
    \begin{tikzpicture}[scale=0.3]
	\draw[step=1cm,gray,very thin] (-4,-4) grid (4,4);
	\draw[->,red] (-4,0) -- (4,0);
	\draw[->,red] (0,-4) -- (0,4);
	\fill[pattern=north east lines, pattern color=gray] (-3,4)--(4,4)--(4,-3);
	
	\draw[-] (0,1)--(-4,3);
	\draw[-] (1,0)--(3,-4);
	\draw[-] (1,0)--(0,1);
	\fill[pattern=north east lines, pattern color=gray] (0,1)--(-4,3)--(-4,4)--(-3,4)--cycle;
	\fill[pattern=north east lines, pattern color=gray] (1,0)--(3,-4)--(4,-4)--(4,-3)--cycle;
    \end{tikzpicture}}
    \subfigure[$Q$ (blue)]{
    \begin{tikzpicture}[scale=0.3]
	\draw[step=1cm,gray,very thin] (-4,-4) grid (4,4);
	\draw[->,red] (-4,0) -- (4,0);
	\draw[->,red] (0,-4) -- (0,4);
	\fill[pattern=north east lines, pattern color=gray] (-3,4)--(4,4)--(4,-3);

	\draw[-] (0,1)--(-4,3);
	\draw[-] (1,0)--(3,-4);
	\draw[-] (1,0)--(0,1);
	\fill[pattern=north east lines, pattern color=gray] (0,1)--(-4,3)--(-4,4)--(-3,4)--cycle;
	\fill[pattern=north east lines, pattern color=gray] (1,0)--(3,-4)--(4,-4)--(4,-3)--cycle;

	\draw[dashed] (-2,2)--(2,-2);
	\draw[-,blue,thick] (-2,2)--(-4,4);
	\draw[-,blue,thick] (2,-2)--(4,-4);
    \end{tikzpicture}}
    \\
    \subfigure[$R$]{
    \begin{tikzpicture}[scale=0.3]
	\draw[step=1cm,white,very thin] (-4,-4) grid (4,4);
	\draw[->,red] (-4,0) -- (4,0);
	\foreach \x in {-2,2}
	    \draw (\x cm,1pt)--(\x cm,-1pt) node[anchor=north] {$\x$};
	\fill[pattern=north east lines, pattern color=blue] (-4,-0.2) -- (-2,-0.2) -- (-2,0.2) -- (-4,0.2) -- cycle;
	\fill[pattern=north east lines, pattern color=blue] (4,-0.2) -- (2,-0.2) -- (2,0.2) -- (4,0.2) -- cycle;
    \end{tikzpicture}}
    \hspace{1pt}
    \subfigure[$S$]{
    \begin{tikzpicture}[scale=0.3]
	\draw[step=1cm,white,very thin] (-4,-4) grid (4,4);
	\draw[->,red] (-4,0) -- (4,0);
	\draw (-2cm,1pt)--(-2cm,-1pt) node[anchor=north] {$-4$};
	\foreach \x in {2}
	    \draw (\x cm,1pt)--(\x cm,-1pt) node[anchor=north] {$\x$};
	\fill[pattern=north east lines, pattern color=blue] (-4,-0.2) -- (-2,-0.2) -- (-2,0.2) -- (-4,0.2) -- cycle;
	\fill[pattern=north east lines, pattern color=blue] (4,-0.2) -- (2,-0.2) -- (2,0.2) -- (4,0.2) -- cycle;
    \end{tikzpicture}}
    \hspace{1pt}
    \subfigure[$C_1$ (dashed area)]{
    \begin{tikzpicture}[scale=0.3]
	\draw[step=1cm,gray,very thin] (-4,-4) grid (4,4);
	\draw[->,red] (-4,0) -- (4,0);
	\draw[->,red] (0,-4) -- (0,4);
	
	\draw[dashed] (-2,-2)--(1,1);
	\draw[-,blue,thick] (1,1)--(4,4);
	\draw[-,blue,thick] (-2,-2)--(-4,-4);
	\draw[-] (-2,4)--(4,-2);
	\draw[-] (-4,0)--(0,-4);
	\fill[pattern=north east lines, pattern color=gray] (-2,4)--(4,4)--(4,-2)--cycle;
	\fill[pattern=north east lines, pattern color=gray] (-4,0)--(-4,-4)--(0,-4)--cycle;	
    \end{tikzpicture}}
\end{center}
\caption{Construction steps of the decision region for a non-pyramidal network.}
\label{fig:non_pyramidal}
\end{figure}

Let $    V_1=\Setbar{(y_1,y_2)\in\RR^2}{y_1+y_2-1>0}$
then the decision region of the first class $C_1$ can be computed recursively:
\begin{align*}
    &P:=\hat{\sigma}_2^{-1}(V_1), \\
    &Q:= P\cap\textrm{range}(W_2^T), \\
    &R:= {(W_2W_2^T)}^{-1} W_2 (Q-b_2), \\
    &S:= {\hat{\sigma}_1}^{-1}(R), \\
    &C_1= W_1 {(W_1^TW_1)}^{-1} (S-b_1) + \textrm{ker}(W_1^T) .
\end{align*}
The outcome of each step is illustrated in Figure \ref{fig:non_pyramidal}.
One can clearly check that the decision region of the first class $C_1$ is given by 
\begin{align*}
    C_1 = \Setbar{x\in\RR^2}{x_1+x_2-2>0 \textrm{ and } x_1+x_2+4<0}
\end{align*}
which is thus a disconnected set.

Overall, this counter-example shows that pyramidal structure of the network, 
at least under the other conditions of Theorem 3.10,  
is a necessary condition to get connected decision regions.

\section*{Does the result hold for ReLU activation function?}
In the following, we show how to derive the decision regions of the network given in Equation \eqref{eq:net_relu}.
\begin{figure*}[ht]
\begin{center}
    \subfigure[$V_1$]{\includegraphics[width=0.3\linewidth]{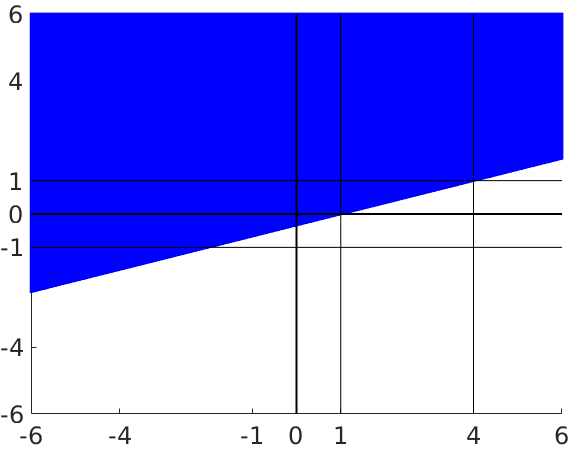}} \hspace{10pt}
    \subfigure[$P$]{\includegraphics[width=0.3\linewidth]{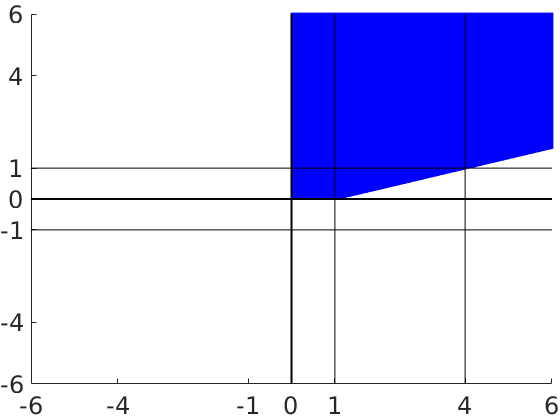}}  \hspace{10pt}
    \subfigure[$Q$]{\includegraphics[width=0.3\linewidth]{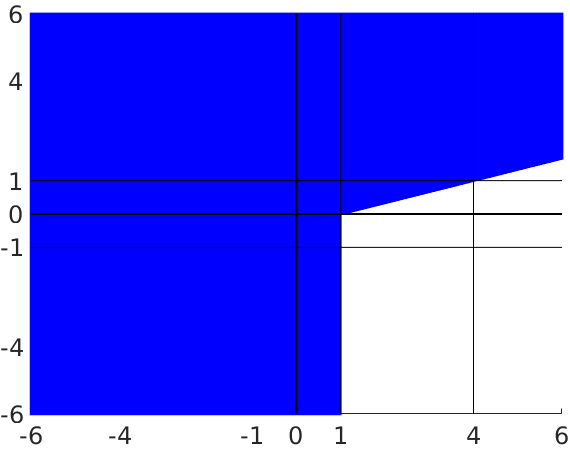}}
    \\
    \subfigure[$R$]{\includegraphics[width=0.3\linewidth]{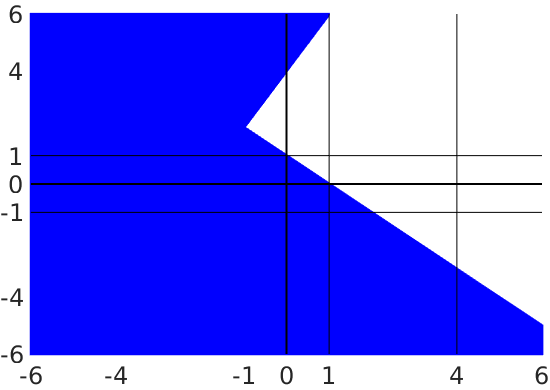}} \hspace{10pt}
    \subfigure[$S$]{\includegraphics[width=0.3\linewidth]{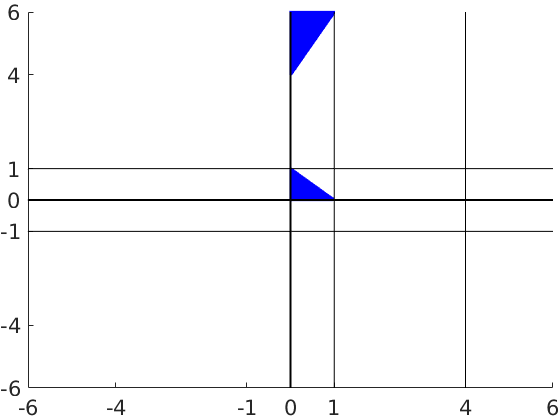}} \hspace{10pt}
    \subfigure[$C_1$]{\includegraphics[width=0.3\linewidth]{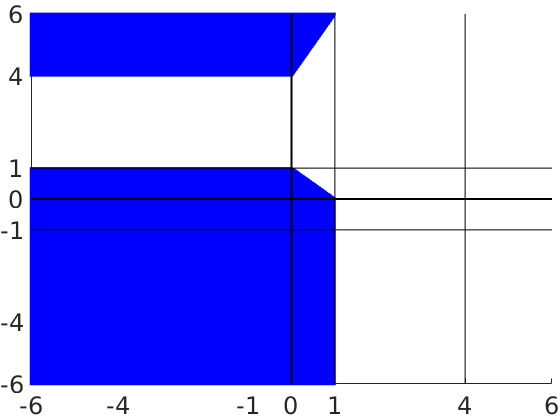}}
\end{center}
\caption{Construction of the decision region of the ReLU network given in Equation (5).}
\label{fig:relu_network}
\end{figure*}

Let us define the set 
\begin{align*}
    V_1
    &=\Setbar{y\in\RR^2}{(W_3^T y+b_3)_1 > (W_3^T y+b_3)_2} \\
    &=\Setbar{(y_1,y_2)\in\RR^2}{y_1-3y_2-1<0} .
\end{align*}
Note that the inverse mapping of the ReLU function $\sigma(t)=\max\Set{t,0}$ is given as
\begin{align*}
    \sigma^{-1}(t)=
    \begin{cases}
	t & t> 0\\
	\Setbar{x\in\RR}{x\leq 0} & t=0.
    \end{cases}
\end{align*}
The decision region of the first class $C_1$ can be computed recursively as 
(see Figure \ref{fig:relu_network} for the illustration):
\begin{align*}
    &P:=V_1\cap\range(\hat{\sigma}_2) = V_1\cap\RR^2_{+} \\
    &Q:=\hat{\sigma}_2^{-1}(P), \\
    &R:= {(W_2^T)}^{-1} (Q-b_2), \\
    &S:= R\cap\range(\hat{\sigma}_1) = R\cap\RR^2_{+} \\
    &C_1:= {\hat{\sigma}_1}^{-1}(S)
\end{align*}
By following these steps, one can easily check that
\begin{align*}
    C_1 
    &= \Setbar{x\in\RR^2} {x_1<1,\;x_2<1,\;x_1+x_2<1} 
    \\ 
    &\;\cup \Setbar{x\in\RR^2}{x_2>4,\;2x_1-x_2+4<0} 
\end{align*}
which is a disconnected set as illustrated in Figure \ref{fig:relu_network}.
\fi
\end{document}